\documentclass[reqno]{amsart}

\usepackage{amssymb}
\usepackage{subfigure, graphicx}

\DeclareMathOperator{\rank}{rank}
\DeclareMathOperator{\sign}{sign}
\DeclareMathOperator{\Syl}{Syl}
\DeclareMathOperator{\Tr}{Tr}

\theoremstyle{plain}
\newtheorem{theorem}{Theorem}
\newtheorem{lemma}{Lemma}

\theoremstyle{definition}
\newtheorem{problem}{Problem}

\theoremstyle{remark}
\newtheorem{remark}{Remark}

\begin{document}

\title[A Solution to the 4P3V Pose Problem in Case of Collinear Cameras]{A Non-Iterative Solution to the Four-Point Three-Views Pose Problem in Case of Collinear Cameras}

\author{E.V. Martyushev}

\address{South Ural State University, 76 Lenin Avenue, Chelyabinsk 454080, Russia}
\email{mev@susu.ac.ru}

\begin{abstract}
We give a non-iterative solution to a particular case of the four-point three-views pose problem when three camera centers are collinear. Using the well-known Cayley representation of orthogonal matrices, we derive from the epipolar constraints a system of three polynomial equations in three variables. The eliminant of that system is a multiple of a 36th degree univariate polynomial. The true (unique) solution to the problem can be expressed in terms of one of real roots of that polynomial. Experiments on synthetic data confirm that our method is robust enough even in case of planar configurations.
\end{abstract}

\keywords{Four-Point Three-Views Pose Problem, Structure-from-Motion, Epipolar Constraints, Cayley Representation}

\maketitle

\section{Introduction}

We first recall some definitions from multiview geometry and formulate the problem in question, see~\cite{HZ, Maybank} for details. A \textit{pinhole camera} is a triple $(O, \Pi, p)$, where $\Pi$ is an image plane, $p$ is a central projection of points in 3-dimensional Euclidean space onto~$\Pi$, and $O$ is a camera center (center of the projection~$p$). The \textit{focal length} is the distance between $O$ and~$\Pi$, the orthogonal projection of~$O$ onto~$\Pi$ is called the \textit{principal point}. A pinhole camera is called \textit{calibrated} if all its intrinsic parameters (such as focal length and principal point's coordinates) are known.

\begin{problem}[The four-point three-views pose problem]
\label{problem1}
Let us consider three calibrated pinhole cameras with centers $O_1$, $O_2$, $O_3$ and four scene points $P_1$, \ldots, $P_4$ being in front of the cameras in 3-dimensional Euclidean space, see Figure~\ref{fig:P4O3}. In every local coordinate system $O_jxyz$, associated with the $j$th camera, the homogeneous coordinates $\begin{pmatrix}x_{ji} & y_{ji} & 1\end{pmatrix}^{\mathrm T}$ of all $P_i$'s are only known. One must find the relative position and orientation of the second and third cameras with respect to the first one and the coordinates of all $P_i$'s in $O_1xyz$.
\end{problem}

\begin{figure}[ht]
\centering
\includegraphics[width=0.6\hsize]{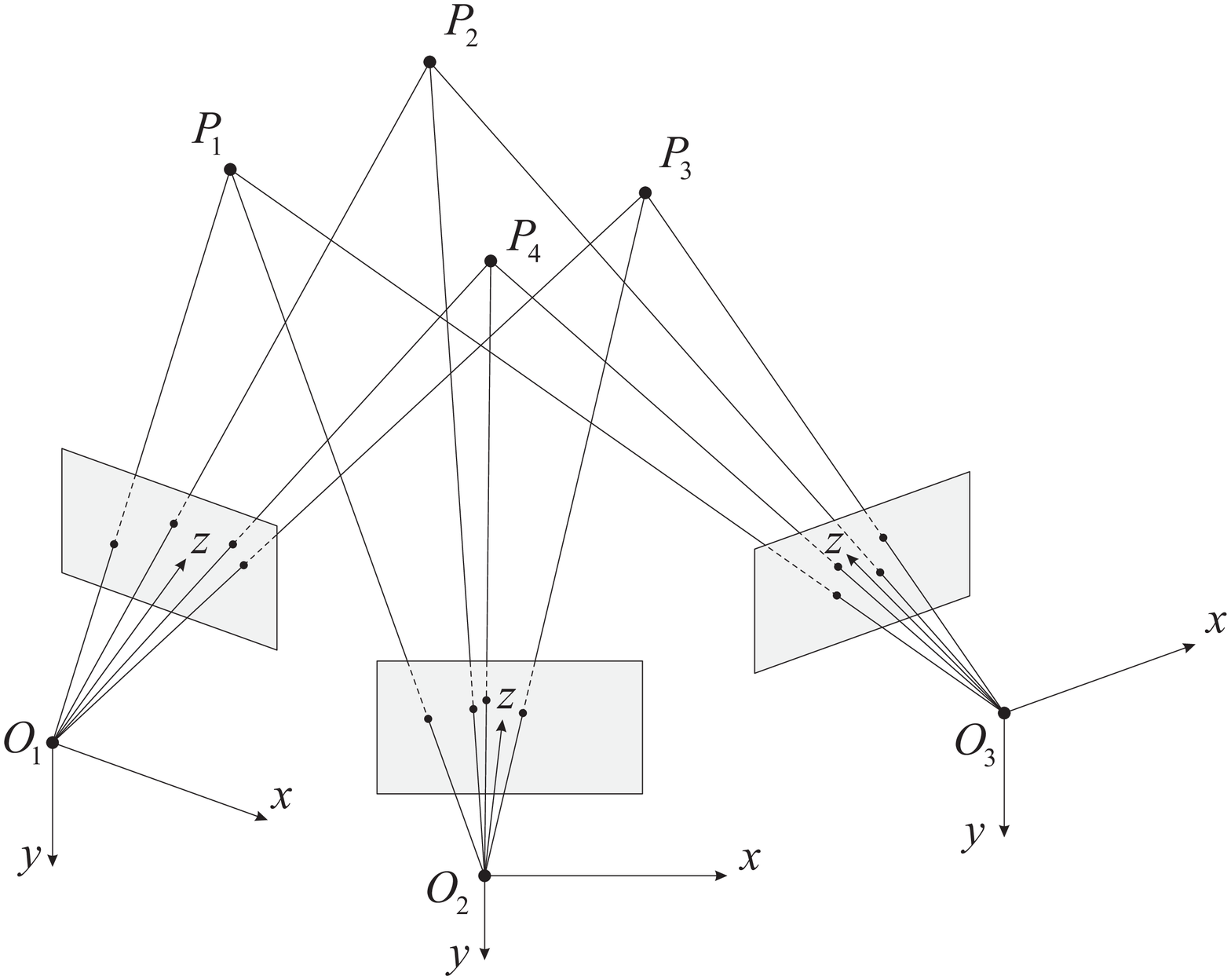}
\caption{To formulation of the four-point three-views pose problem}\label{fig:P4O3}
\end{figure}

\begin{remark}
\label{rem:ambiguity}
It is well-known that the solution to Problem~\ref{problem1} is only defined up to an overall scale. That is, multiplying the coordinates of all the reconstructed points by any $\lambda > 0$, we get another solution to Problem~\ref{problem1}. In order to resolve this ambiguity, we assume from now on that the length of the \textit{baseline} joining $O_1$ and $O_2$ is given and equals~$d$.
\end{remark}

Problem~\ref{problem1} is known to be slightly over-determined and in general to have a unique solution~\cite{HN, QTM}. The only iterative approach to the problem has been proposed in~\cite{NS}.

In the paper presented, we consider a light (but still highly nontrivial) version of Problem~\ref{problem1} when the three camera centers $O_1$, $O_2$ and $O_3$ are collinear. We use the method of paper~\cite{Mart} to derive a relatively simple system of three polynomials in three variables. It is shown that the eliminant of that system is a multiple of a 36th degree irreducible univariate polynomial. It is conjectured that in general one of its real roots encodes a solution to Problem~\ref{problem1}. Experiments show that our approach is comparable in accuracy with the existing five-point solvers.

Here we outline our algorithm:
\begin{enumerate}
\item
Transform the initial data $x_{ji}$, $y_{ji}$ in such a way that $x_{j1} = y_{j1} = x_{j2} = 0$, $j=1,2,3$, applying rotations~\eqref{eq:rotations}.
\item
Derive from the epipolar constraints six polynomials $f^{(j)}_i$ by~\eqref{eq:systemfji} and one polynomial $f^\text{mix}$ by~\eqref{eq:fmix}.
\item
Construct system~\eqref{eq:h2h3hmix} of three polynomials in three variables.
\item
Using resultants and polynomial divisions, derive a 36th degree univariate polynomial $\mathcal S$.
\item
For every real root of~$\mathcal S$, find the matrices $R^{(2)}$, $R^{(3)}$ and~$[t]_\times$ by~\eqref{eq:cayley} and~\eqref{eq:transl} respectively.
\item
Find the coordinates of $O_2$, $O_3$ and $P_i$ by~\eqref{eq:coordsO2},~\eqref{eq:coordsO3} and~\eqref{eq:coordsPi} respectively.
\end{enumerate}

The paper is organized as follows. In Section~\ref{sec:description}, we describe in detail our algorithm. In Section~\ref{sec:experiments}, we make a comparison of our algorithm with the five-point Li-Hartley solver~\cite{LH} on two sets of synthetic data. In Section~\ref{sec:discussion}, we discuss the results of the paper.

\section{Description of the algorithm}
\label{sec:description}

\subsection{Transformation of the initial data}

Initial data for our algorithm are the values $x_{ji}$, $y_{ji}$ for $j = 1, 2, 3,$ and $i = 1, \ldots, 4$, as well as the distance~$d$ (see Remark~\ref{rem:ambiguity}).

Without loss of generality, we can set $x_{j1} = y_{j1} = x_{j2} = 0$ for $j = 1, 2, 3$. If $x_{j1}$, $y_{j1}$ and $x_{j2}$ differ from zero, we rotate $O_jxyz$ to $O_j\tilde{x}\tilde{y}\tilde{z}$ with the matrix
\begin{equation}
\label{eq:rotations}
\rho_j = R_{\varphi_j} R_{\theta_j} R_{\psi_j},
\end{equation}
where $R_{\varphi_j}$, $R_{\theta_j}$, $R_{\psi_j}$ are rotations through the angles $\varphi_j$, $\theta_j$, $\psi_j$ respectively around the axes $O_jz$, $O_jy$, $O_jx$ respectively. The angles can be expressed by the formulas:
\begin{equation}
\begin{split}
\tan\psi_j &= y_{j1},\\
\tan\theta_j &= \frac{x_{j1}}{\sqrt{1 + y_{j1}^2}},\\
\tan\varphi_j &= \frac{x_{j2}(1 + y_{j1}^2) - x_{j1}(1 + y_{j1}y_{j2})}{(y_{j2} - y_{j1})\sqrt{1 + x_{j1}^2 + y_{j1}^2}}.
\end{split}
\end{equation}
One then verifies that $\tilde{x}_{j1} = \tilde{y}_{j1} = \tilde{x}_{j2} = 0$. Besides, the depth of~$P_1$ from the camera center~$O_j$ is
\[
\sqrt{1 + x_{j1}^2+y_{j1}^2} > 0,
\]
i.e. the point $P_1$ remains to be in front of all the cameras.

We will see that the above transformation of the initial data, being quite simple, noticeably simplify our further computations.

\subsection{Epipolar constraints and essential matrices}
From now on we assume that the three camera centers $O_1$, $O_2$ and~$O_3$ are collinear.

Denote by $\begin{pmatrix}x'_{ji} & y'_{ji} & 1\end{pmatrix}^{\mathrm T}$ and $\mathbf{t} = \begin{pmatrix}t_x & t_y & t_z\end{pmatrix}^{\mathrm T}$ the \textit{directing vectors} of $O_jP_i$ and $O_1O_2$ respectively in the \textit{world} coordinate system $O_1xyz$. It follows that the directing vector of $O_1O_3$ is $\sigma \mathbf{t}$, where $\sigma$ is either $+1$ or~$-1$.

Let Cartesian coordinates of $P_i$ and $O_j$ be $(x_{P_i}, y_{P_i}, z_{P_i})$ and $(x_{O_j}, y_{O_j}, z_{O_j})$ respectively. Then, it is easy to see that ($j = 2, 3$; $i = 1, \ldots, 4$)
\begin{equation}
\begin{split}
(x_{1i} - x'_{ji})\,t_z z_{P_i} &= (t_x - t_z x'_{ji})\,z_{O_j},\\
(y_{1i} - y'_{ji})\,t_z z_{P_i} &= (t_y - t_z y'_{ji})\,z_{O_j}.
\end{split}
\end{equation}
Eliminating $z_{P_i}$, $z_{O_2}$ and $z_{O_3}$ from this system, we get
\begin{equation}
\label{eq:system}
\begin{split}
t_x (y'_{2i} - y_{1i}) + t_y (x_{1i} - x'_{2i}) + t_z (y_{1i}x'_{2i} - x_{1i}y'_{2i}) &= 0,\\
t_x (y'_{3i} - y_{1i}) + t_y (x_{1i} - x'_{3i}) + t_z (y_{1i}x'_{3i} - x_{1i}y'_{3i}) &= 0.
\end{split}
\end{equation}

Consider matrices $R^{(j)} \in \mathrm{SO}(3)$, $j = 2, 3$, rotating $O_jxyz$ so that
\begin{equation}
\label{eq:rotation}
\omega_{ji}\begin{pmatrix}x'_{ji} \\ y'_{ji} \\ 1\end{pmatrix} = R^{(j)} \begin{pmatrix}x_{ji} \\ y_{ji} \\ 1\end{pmatrix}.
\end{equation}
where $\omega_{ji}$ is called the \textit{depth} of the point $P_i$ from the camera center~$O_j$. Then, it is easy to see that~\eqref{eq:system} is equivalent to the so-called \textit{epipolar constraints}:
\begin{equation}
\label{eq:epipolar}
\begin{pmatrix}
x_{1i} & y_{1i} & 1
\end{pmatrix} E^{(j)} \begin{pmatrix} x_{2i} \\ y_{2i} \\ 1 \end{pmatrix} = 0,
\end{equation}
where $E^{(j)} = [t]_\times R^{(j)}$ is called the \textit{essential matrix} and $[t]_\times$ is the skew-symmetric cross-product operator.

\subsection{Seven polynomials in six variables}
Our approach is based on the following well-known result.
\begin{theorem}[\cite{Cayley}]
\label{thm:cayley}
If a matrix $R \in \mathrm{SO}(3)$ is not a rotation through the angle~$\pi$ about certain axis, then $R$ can be represented as
\begin{equation}
\label{eq:cayley}
\frac{1}{\Delta}\begin{pmatrix} 1-u^2+w^2(1-v^2) & 2(vw^2-u) & 2w(u+v) \\ 2(vw^2+u) & 1-u^2-w^2(1-v^2) & 2w(uv-1) \\ 2w(u-v) & 2w(uv+1) & 1+u^2-w^2(1+v^2) \end{pmatrix},
\end{equation}
where $\Delta = 1+u^2+w^2(1+v^2)$.
\end{theorem}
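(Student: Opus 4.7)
The plan is to deduce the statement from the classical Cayley transform $R=(I-A)(I+A)^{-1}$ followed by a rational change of coordinates. For any real skew-symmetric $3\times 3$ matrix $A$, the eigenvalues of $A$ are purely imaginary, so $I+A$ is invertible; a direct check shows that $R=(I-A)(I+A)^{-1}$ is orthogonal with $\det R=+1$, and the map $A\mapsto R$ is a bijection between skew-symmetric matrices and the set of rotations that do not have $-1$ as an eigenvalue, which is precisely $\mathrm{SO}(3)$ with rotations through angle~$\pi$ removed. Thus the task reduces to parametrizing skew-symmetric $A$'s in a way that reproduces the right-hand side of~\eqref{eq:cayley}.

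Concretely, I would write
\[
A=\begin{pmatrix} 0 & -c & b \\ c & 0 & -a \\ -b & a & 0 \end{pmatrix}
\]
and compute $(I-A)(I+A)^{-1}$ in closed form. The inversion can be done cleanly via Cayley--Hamilton: since $A^{3}=-(a^{2}+b^{2}+c^{2})\,A$, the matrix $(I+A)^{-1}$ is a polynomial of degree~$2$ in $A$ with explicit rational coefficients. The outcome is the standard formula
\[
R=\frac{1}{1+a^{2}+b^{2}+c^{2}}\begin{pmatrix} 1+a^{2}-b^{2}-c^{2} & 2(ab-c) & 2(b+ac) \\ 2(ab+c) & 1-a^{2}+b^{2}-c^{2} & 2(bc-a) \\ 2(ac-b) & 2(a+bc) & 1-a^{2}-b^{2}+c^{2} \end{pmatrix}.
\]

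To reach~\eqref{eq:cayley} I would then perform the substitution $a=w$, $b=vw$, $c=u$; term by term this sends the denominator to $1+u^{2}+w^{2}+v^{2}w^{2}=\Delta$ and each matrix entry to the corresponding entry of~\eqref{eq:cayley} (for instance, the $(1,1)$ entry becomes $1-u^{2}+w^{2}(1-v^{2})$, and the $(1,3)$ entry becomes $2(b+ac)=2(vw+uw)=2w(u+v)$). Finally, one verifies that this substitution is generically onto the skew-symmetric parameter space: whenever $a\neq 0$ the parameters are uniquely recovered as $w=a$, $u=c$, $v=b/a$, so the only $A$'s missed lie in the thin locus $a=0$, $b\neq 0$, which has no bearing on generic use in Problem~\ref{problem1}.

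The main obstacle is purely computational: carrying out the Cayley inversion and then verifying that the substitution reproduces~\eqref{eq:cayley} entry by entry. An alternative that bypasses $(I+A)^{-1}$ altogether is to start from~\eqref{eq:cayley} directly and check $RR^{\mathrm T}=I$ together with $\det R=1$; these reduce to polynomial identities in $u,v,w$ that are mechanical, though tedious, and best confirmed with a symbolic algebra system to avoid sign errors in the off-diagonal entries.
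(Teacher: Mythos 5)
The paper offers no proof of Theorem~\ref{thm:cayley}: it is quoted as a classical result with a citation to Cayley's 1846 paper, so there is nothing internal to compare your argument against. Your route --- the Cayley transform $R=(I\pm A)(I\mp A)^{-1}$ for skew-symmetric $A$, inversion via Cayley--Hamilton, and the substitution $a=w$, $b=vw$, $c=u$ --- is the standard derivation and it does reproduce~\eqref{eq:cayley}: the denominator becomes $1+u^{2}+w^{2}(1+v^{2})=\Delta$ and every entry checks out (I verified all nine). One small slip: with your choice of $A$ and the convention $R=(I-A)(I+A)^{-1}$, the closed form you display is actually the transpose of what that convention yields (test it on $a=b=0$, $c=t$); you want $R=(I+A)(I-A)^{-1}$, or equivalently replace $A$ by $-A$. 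Since $A\mapsto -A$ is a bijection of the skew-symmetric matrices, this does not affect the argument, only the bookkeeping. Your observation about the thin locus $a=0$, $b\neq 0$ is a genuine and worthwhile point: the parametrization $(u,v,w)\mapsto(w,vw,u)$ is not onto, so the theorem as literally stated is slightly too strong --- rotations whose axis $(a,b,c)$ satisfies $a=0\neq b$ are reachable only as a limit $w\to 0$, $v\to\infty$ with $vw$ fixed, not at finite parameter values. Flagging that this excluded set is codimension one and irrelevant to the generic use in Problem~\ref{problem1} is exactly the right way to handle it.
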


Let $Oxyz$ and $OXYZ$ be two coordinate systems with a common origin and of the same handedness. Then the \textit{Euler angles}, transforming $Oxyz$ to $OXYZ$, are defined as~\cite{LL}:
\begin{itemize}
\item
$\varphi$ is the angle between the $x$-axis and the \textit{line of nodes}, i.e. the line of intersection of the $xy$ and the $XY$ coordinate planes.
\item
$\theta$ is the angle between the $z$-axis and the $Z$-axis.
\item
$\psi$ is the angle between the line of nodes and the $X$-axis.
\end{itemize}

\begin{remark}
By direct calculation one verifies that the Euler angles $\varphi$, $\theta$ and~$\psi$ can be expressed in terms of the parameters $u$, $v$ and~$w$ as follows:
\begin{equation}
\label{eq:euler}
\begin{split}
\varphi &= \arctan\frac{u + v}{1 - uv},\\
\psi &= \arctan\frac{u - v}{1 + uv},\\
\theta &= \pm 2\arctan w\sqrt{\frac{1 + v^2}{1 + u^2}}.
\end{split}
\end{equation}
\end{remark}

\begin{lemma}
\label{lem:identity}
Let $\varphi_j$, $\theta_j$ and~$\psi_j$ be Euler angles  parameterizing the matrix~$R^{(j)}$. In case of collinear camera centers $O_1$, $O_2$ and $O_3$, the following identity holds:
\[
\tan\varphi_2 = \tan\varphi_3.
\]
\end{lemma}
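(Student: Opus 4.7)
The plan is to reduce $\tan\varphi_j$ to a purely geometric quantity — the direction from $O_j$ to $P_1$ in world coordinates — and then observe that collinearity forces the first two components of this direction to have a ratio independent of $j$.

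The algebraic step is a direct reading of~\eqref{eq:cayley} and~\eqref{eq:euler}: the top two entries of the third column of $R^{(j)}$ are
\[
R^{(j)}_{13}=\frac{2w_j(u_j+v_j)}{\Delta_j}, \qquad R^{(j)}_{23}=\frac{2w_j(u_jv_j-1)}{\Delta_j},
\]
so by the first line of~\eqref{eq:euler},
\[
\tan\varphi_j \;=\; \frac{u_j+v_j}{1-u_jv_j} \;=\; -\,\frac{R^{(j)}_{13}}{R^{(j)}_{23}}.
\]
The geometric step uses that $R^{(j)}$ maps the (pre-rotated) camera-$j$ frame to the world frame, so its third column is the image of $(0,0,1)^{\mathrm T}$, i.e.\ the direction of camera $j$'s optical axis in the world. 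Because the pre-rotation forces $x_{j1}=y_{j1}=0$, the point $P_1$ lies on that optical axis; thus $R^{(j)}(0,0,1)^{\mathrm T}$ is a positive multiple of $P_1-O_j$ expressed in world coordinates.

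The final step invokes the collinearity hypothesis. From $x_{11}=y_{11}=0$ one has $P_1=(0,0,z_{P_1})^{\mathrm T}$ in the world frame, and from collinearity of $O_1,O_2,O_3$ along the direction $\mathbf{t}$ one has $O_j=s_j\mathbf{t}$ with $s_j\neq 0$ for $j=2,3$. Hence $(P_1-O_j)_x=-s_jt_x$ and $(P_1-O_j)_y=-s_jt_y$, and substituting into the identity above gives $\tan\varphi_j = -t_x/t_y$ for both $j=2$ and $j=3$, which is the desired equality. I do not expect a serious obstacle: once one notices that the Cayley parametrization packages $\tan\varphi_j$ as the ratio $-R^{(j)}_{13}/R^{(j)}_{23}$, everything else is a short coordinate computation. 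The only caveats concern the generic-position assumptions $R^{(j)}_{23}\neq 0$ and $t_y\neq 0$, which exclude $\theta_j=0$ and the case in which the baseline lies in the $xz$-plane of the world frame.
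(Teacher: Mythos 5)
Your proof is correct, and it rests on the same two geometric facts as the paper's: the normalization $x_{j1}=y_{j1}=0$ puts $P_1$ on every optical axis, and collinearity of the centers then forces the relevant directions to agree for $j=2,3$. Where you differ is in the realization. The paper's proof is purely synthetic and two sentences long: since all three $z$-axes meet at $P_1$ and the centers are collinear, the lines of nodes of $O_2xyz$ and $O_3xyz$ coincide, so $\varphi_2=\varphi_3 \bmod \pi$ by the very definition of the Euler angle $\varphi$. You instead make the line-of-nodes argument quantitative: you read off $R^{(j)}_{13}$ and $R^{(j)}_{23}$ from the Cayley matrix~\eqref{eq:cayley}, identify $\tan\varphi_j=-R^{(j)}_{13}/R^{(j)}_{23}$ via~\eqref{eq:euler}, recognize the third column of $R^{(j)}$ as a positive multiple of $P_1-O_j$, and conclude $\tan\varphi_j=-t_x/t_y$ for both $j$. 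Your version is longer but buys something the paper's does not state: the explicit common value $-t_x/t_y$, which ties $s=\tan\varphi_j$ directly to the translation direction and is consistent with the second line of~\eqref{eq:transl}. The caveats you flag ($R^{(j)}_{23}\neq 0$, $t_y\neq 0$, $O_j\neq O_1$) are genuine but are exactly the kind of non-generic degeneracies the paper silently excludes throughout; note that when $t_y=0$ both tangents are simultaneously undefined while $\varphi_2=\varphi_3 \bmod \pi$ still holds, so nothing essential is lost.
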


\begin{proof}
Due to the condition $x_{j1} = y_{j1} = 0$, $j = 1, 2, 3$, the $z$-axes of all $O_jxyz$ intersect in the only point~$P_1$. Since $O_1$, $O_2$ and $O_3$ are collinear, the lines of nodes of $O_2xyz$ and $O_3xyz$ are identical and hence, by definition, $\varphi_2 = \varphi_3 \mod \pi$.
\end{proof}

\begin{lemma}
\label{lem:invtrans}
Let $E^{(j)}(u, v, w)$ be an essential matrix subject to system~\eqref{eq:epipolar}. If
\begin{equation}
\label{eq:invtrans}
\begin{split}
u'_j &= -u_j^{-1},\\
v'_j &= -v_j^{-1},\\
w'_j &= -\frac{v_j}{u_j}\, \frac{y_{12}w_j(v_j + u_j) + y_{j2}w_j(v_j - u_j) - 2y_{12}y_{j2}u_j}{y_{12}(v_j + u_j) - y_{j2}(v_j - u_j) + 2y_{12}y_{j2}v_jw_j},
\end{split}
\end{equation}
then $E^{(j)}(u'_j, v'_j, w'_j) = -E^{(j)}(u_j, v_j, w_j)$.
\end{lemma}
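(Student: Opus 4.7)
The plan is to interpret the lemma as the twisted-pair ambiguity for essential matrices. For any $R \in \mathrm{SO}(3)$ and $t \neq 0$, the matrix $-[t]_\times R$ again has the form $[t]_\times R'$ with a unique $R' \in \mathrm{SO}(3)$, namely $R' = R_t(\pi)\,R$, where $R_t(\pi) = 2\hat t\hat t^{\mathrm T} - I$ is the half-turn about the axis $t$; this follows from $[t]_\times \hat t = 0$. (The uniqueness of $\lambda$ in $R+R' = t\lambda^{\mathrm T}$ is forced by $R,R'\in\mathrm{SO}(3)$.) Hence it suffices to verify that the substitution~\eqref{eq:invtrans} carries the Cayley parameters of $R^{(j)}$ to those of $R_t(\pi)\,R^{(j)}$, with $t$ the direction prescribed by the epipolar constraints at $P_1$ and $P_2$.

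The first step is to recover this direction. The constraint~\eqref{eq:epipolar} at $P_1$, after $x_{j1} = y_{j1} = 0$, collapses to $E^{(j)}_{33} = 0$, equivalently $t_y R^{(j)}_{13} = t_x R^{(j)}_{23}$. The constraint at $P_2$, with $x_{12} = x_{j2} = 0$, contributes a second linear relation among the entries of the last two columns of $E^{(j)}$, and this second relation is the only place where $y_{12}$ and $y_{j2}$ enter the picture. Solving the two relations together pins down $t$ up to an overall sign as a rational function of the entries of $R^{(j)}$ and of $y_{12}, y_{j2}$, which is precisely what makes $y_{12}$ and $y_{j2}$ appear in the formula for $w'_j$ but not in those for $u'_j, v'_j$.

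The shape of the $u$- and $v$-pieces of the substitution is then strongly predicted by~\eqref{eq:euler}: via $u = \tan\tfrac{\varphi+\psi}{2}$ and $v = \tan\tfrac{\varphi-\psi}{2}$, the maps $u \mapsto -1/u$ and $v \mapsto -1/v$ correspond to $\varphi \mapsto \varphi+\pi$ with $\psi$ unchanged, which is precisely the Euler-angle effect of a half-turn whose axis is constrained as in the previous step; the formula for $w'_j$ then records the resulting change in the angle $\theta$, whose magnitude is governed by the precise placement of $t$ and therefore inherits the $y_{12}, y_{j2}$ dependence.

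The main obstacle is algebraic rather than conceptual. The cleanest finish is a direct entrywise verification: substitute $(u'_j, v'_j, w'_j)$ into~\eqref{eq:cayley}, left-multiply by $[t]_\times$ with $t$ as above, clear the denominators, and check that the resulting matrix equals $-E^{(j)}(u_j, v_j, w_j)$ modulo the two epipolar identities. Since only nine polynomial identities must be confirmed in three auxiliary variables $u, v, w$ and two parameters $y_{12}, y_{j2}$, this is a finite symbolic calculation, most conveniently handed to a computer algebra system; the only non-routine ingredient is keeping track of how the denominator in the formula for $w'_j$ cancels against $\Delta$ in~\eqref{eq:cayley} after invoking the two epipolar relations.
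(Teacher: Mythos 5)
Your proposal is correct, and its decisive step --- recovering $t_x,t_y$ from the epipolar constraints at $P_1,P_2$ (exactly the relations $E^{(j)}_{33}=0$ and the $y_{12},y_{j2}$-dependent one at $P_2$, which yield~\eqref{eq:transl}) and then checking $E^{(j)}(u'_j,v'_j,w'_j)=-E^{(j)}(u_j,v_j,w_j)$ by a finite symbolic computation --- is precisely the paper's own proof, which likewise reduces to ``a straightforward computation'' once $E^{(j)}$ is written as a function of $(u_j,v_j,w_j)$ alone. The twisted-pair interpretation $R'=(2\hat t\hat t^{\mathrm T}-I)R$ and the Euler-angle reading of $u\mapsto -1/u$, $v\mapsto -1/v$ as $\varphi\mapsto\varphi+\pi$ are a genuinely illuminating gloss absent from the paper, but they function here as motivation for the shape of~\eqref{eq:invtrans} rather than as a different proof.
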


\begin{proof}
Let $E^{(j)} = [t]_\times R^{(j)}$ be subject to~\eqref{eq:epipolar}. Then, from the first two equations of~\eqref{eq:epipolar} (for $i = 1, 2$) we find
\begin{equation}
\label{eq:transl}
\begin{split}
t_x &= t_z\, \frac{R^{(j)}_{13}(R^{(j)}_{12}y_{j2} + R^{(j)}_{13})y_{12}}{(R^{(j)}_{23}R^{(j)}_{12} - R^{(j)}_{13}R^{(j)}_{22})y_{j2} + R^{(j)}_{13}(R^{(j)}_{32}y_{j2} + R^{(j)}_{33})y_{12}},\\
t_y &= t_x\, \frac{R^{(j)}_{23}}{R^{(j)}_{13}} = t_x\, \frac{u_jv_j - 1}{u_j + v_j},
\end{split}
\end{equation}
where $R^{(j)}_{ik}$ is the $(i, k)$th entry of~$R^{(j)}$. Substituting these values into the matrix~$[t]_\times$, we get $E^{(j)} = E^{(j)}(u_j, v_j, w_j)$. By a straightforward computation, we find that the equation $E^{(j)}(u_j, v_j, w_j) + E^{(j)}(u'_j, v'_j, w'_j) = 0$ has the only solution~\eqref{eq:invtrans}.
\end{proof}

Since system~\eqref{eq:system} is linear and homogeneous in $t_x$, $t_y$ and $t_z$, it has a nontrivial solution if and only if $\rank \begin{pmatrix}Q^{(2)} \\ Q^{(3)}\end{pmatrix} \leq 2$, where
\begin{equation}
\label{eq:matrixQj}
Q^{(j)} = \begin{pmatrix}
y'_{j1} & -x'_{j1} & 0 \\
y'_{j2} - y_{12} & -x'_{j2} & y_{12}x'_{j2} \\
y'_{j3} - y_{13} & x_{13} - x'_{j3} & y_{13}x'_{j3} - x_{13}y'_{j3} \\
y'_{j4} - y_{14} & x_{14} - x'_{j4} & y_{14}x'_{j4} - x_{14}y'_{j4}
\end{pmatrix}.
\end{equation}
Let $i = i_{(\alpha, \beta, \gamma)}$ number triples from $\{1, 2, 3, 4\}$ so that $i_{(1, 2, 3)} = 1$, $i_{(1, 2, 4)} = 2$, $i_{(1, 3, 4)} = 3$. Then, it follows that $\det Q^{(j)}_i = 0$, where $Q^{(j)}_i$ is a $3\times 3$ submatrix of~$Q^{(j)}$ corresponding to the rows $\alpha$, $\beta$ and $\gamma$. Thus, we get a system
\begin{equation}
\label{eq:systemfji}
f^{(j)}_i = \det Q^{(j)}_i\,\frac{\Delta_j^2\, \omega_{j \alpha}\omega_{j \beta}\omega_{j \gamma}}{w_j} = 0, \quad j = 2, 3, \quad i = 1, 2, 3.
\end{equation}
where $\Delta_j = 1+u_j^2+w_j^2(1+v_j^2)$ and, for instance, $\omega_{j \alpha}$ is the depth of the point~$P_\alpha$ from~$O_j$ (cf.~\eqref{eq:rotation}). One verifies that in general each $f^{(j)}_i$ is an irreducible polynomial in $u_j$, $v_j$, $w_j$ of the sixth total degree.

\begin{remark}
In~\eqref{eq:systemfji}, we assume that $w_j \neq 0$. No solution is lost here, since $w_j = 0$ means that the matrix $R^{(j)}$ is a rotation about the $z$-axis, which may only occur if $O_j = O_1$.
\end{remark}

In addition to~\eqref{eq:systemfji}, we can also obtain more polynomial equations by treating the rows of both $Q^{(2)}$ and~$Q^{(3)}$. For example, consider a submatrix $Q^\text{mix}$ of~$\begin{pmatrix}Q^{(2)} \\ Q^{(3)}\end{pmatrix}$ consisting of the first two rows of~$Q^{(2)}$ and the second row of~$Q^{(3)}$. Then,
\begin{equation}
\label{eq:fmix}
f^\text{mix} = \det Q^\text{mix}\, \frac{\Delta_2^2 \Delta_3 \, \omega_{21}\omega_{22}\omega_{32}}{x'_{21}} = 0,
\end{equation}
where we assume that $x'_{21} \neq 0$. In general, $f^\text{mix}$ is an irreducible polynomial in $u_2$, $v_2$, $w_2$, $u_3$, $v_3$, $w_3$ of the seventh total degree.

\subsection{Two polynomials in three variables}

In this subsection, we deal with the polynomials $f^{(j)}_i$ defined in~\eqref{eq:systemfji}. We are going to eliminate the variables $w_2$ and $w_3$ from them and then simplify the obtained polynomials using the transformations~\eqref{eq:invtrans}.

Notice that every polynomial $f^{(j)}_i$ is of the second degree in~$w_j$, i.e.
\begin{equation}
\label{eq:fjiform}
f^{(j)}_i = a^{(j)}_iw_j^2 + b^{(j)}_iw_j + c^{(j)}_i,
\end{equation}
where $a^{(j)}_i$, $b^{(j)}_i$, $c^{(j)}_i$ are polynomials in $u_j$ and~$v_j$. Consider a matrix
\[
F^{(j)} = \begin{pmatrix} a^{(j)}_1 & b^{(j)}_1 & c^{(j)}_1 \\ a^{(j)}_2 & b^{(j)}_2 & c^{(j)}_2 \\ a^{(j)}_3 & b^{(j)}_3 & c^{(j)}_3 \end{pmatrix}.
\]
Then, system~\eqref{eq:systemfji} (for each $j$) has a solution if and only if $\det F^{(j)} = 0$, i.e. we get two polynomial equations:
\begin{equation}
\label{eq:detFj}
g^{(j)}(u_j, v_j) = \det F^{(j)} = 0, \quad j = 2, 3.
\end{equation}
The polynomials $g^{(2)}$ and $g^{(3)}$ are of the 10th total degree in the variables $u_2$, $v_2$ and $u_3$, $v_3$ respectively. We can reduce the total number of variables to three by introducing a new variable
\[
s = \frac{u_2 + v_2}{1 - u_2v_2} = \frac{u_3 + v_3}{1 - u_3v_3},
\]
where the second equality holds due to Lemma~\ref{lem:identity}. Note that the variable~$s$ is unchanged under the transformations~\eqref{eq:invtrans}, i.e.
\[
s' = \frac{u'_j + v'_j}{1 - u'_jv'_j} = \frac{u_j + v_j}{1 - u_jv_j} = s.
\]

Substituting
\begin{equation}
\label{eq:vjformula}
v_j = \frac{s - u_j}{1 + su_j}.
\end{equation}
to~\eqref{eq:detFj}, we get
\[
h^{(j)}(u_j, s) = (1 + su_j)^6\, g^{(j)}(u_j, \frac{s - u_j}{1 + su_j}).
\]
By Lemma~\ref{lem:invtrans}, $h^{(j)}(u'_j, s') = h^{(j)}(-u_j^{-1}, s) = 0$. Moreover, $h^{(j)}$ has a special symmetric form:
\begin{equation}
\label{eq:gj}
h^{(j)} = u_j^{12} \sum\limits_{k=0}^{12} p^{(j)}_k \left[u_j^{12-k} + (-u_j)^{k-12}\right],
\end{equation}
where $p^{(j)}_k$ are 6th degree polynomials in~$s$. Due to the above symmetry, we can introduce a new variable $\tilde{u_j} = u_j - u_j^{-1}$, and then transform $h_j$ to the polynomial
\begin{equation}
\label{eq:tildehj}
\tilde{h}^{(j)} = \sum\limits_{k=0}^6 \tilde{p}^{(j)}_k \tilde{u}_j^k,
\end{equation}
where
\begin{equation}
\label{eq:tildep}
\begin{split}
\tilde{p}^{(j)}_{2k} &= \sum\limits_{i=k}^6 p^{(j)}_{12-2i} \varkappa_{i, k},\\
\tilde{p}^{(j)}_{2k+1} &= \sum\limits_{i=k}^5 p^{(j)}_{11-2i} \varkappa_{i+1/2, k+1/2},
\end{split}
\end{equation}
and $\varkappa_{i, k}$ denotes the sum of two binomial coefficients:
\begin{equation}
\label{eq:kappa}
\varkappa_{i, k} = \binom{i+k}{i-k} + \binom{i+k-1}{i-k-1}.
\end{equation}

As a result, we have two irreducible polynomials $\tilde{h}^{(2)}(\tilde{u}_2, s)$ and $\tilde{h}^{(3)}(\tilde{u}_3, s)$ of the 12th total degree each. In order to solve the problem, we need at least one more polynomial in the variables $\tilde{u}_2$, $\tilde{u}_3$ and~$s$.

\subsection{One more polynomial in three variables}

Let us consider the polynomial $f^\text{mix}$ defined in~\eqref{eq:fmix}. As in the previous subsection, we are going to eliminate the variables $w_2$ and~$w_3$ from $f^\text{mix}$ and then simplify the result using transformations~\eqref{eq:invtrans}.

First, we notice that $f^\text{mix}$ is of the second degree in the variable~$w_2$:
\begin{equation}
\label{eq:fmixform}
f^\text{mix} = a^\text{mix}w_2^2 + b^\text{mix}w_2 + c^\text{mix},
\end{equation}
where $a^\text{mix}$, $b^\text{mix}$, $c^\text{mix}$ are polynomials in the remaining five variables. Consider a matrix
\[
F^\text{mix} = \begin{pmatrix} a^{(2)}_1 & b^{(2)}_1 & c^{(2)}_1 \\ a^{(2)}_2 & b^{(2)}_2 & c^{(2)}_2 \\ a^\text{mix} & b^\text{mix} & c^\text{mix} \end{pmatrix}.
\]
The three polynomials $f^{(2)}_1$, $f^{(2)}_2$ and $f^\text{mix}$ have a common solution if and only if $\det F^\text{mix} = 0$. This yields
\begin{equation}
\label{eq:hatfmix}
\hat{f}^\text{mix} = \frac{\det F^\text{mix}}{(y_{12}^2 - y_{22}^2) u_2^2 + 2(y_{12}^2 + 2y_{12}^2y_{22}^2 + y_{22}^2)u_2v_2 + (y_{12}^2 - y_{22}^2) v_2^2} = 0.
\end{equation}
One verifies that the polynomial $\hat{f}^\text{mix}$ is in turn of the second degree in~$w_3$, i.e.
\begin{equation}
\label{eq:hatfmixform}
\hat{f}^\text{mix} = \hat{a}^\text{mix}w_3^2 + \hat{b}^\text{mix}w_3 + \hat{c}^\text{mix},
\end{equation}
where $\hat{a}^\text{mix}$, $\hat{b}^\text{mix}$, $\hat{c}^\text{mix}$ are polynomials in $u_2$, $v_2$, $u_3$ and $v_3$. Similarly, we define the matrix
\[
\hat{F}^\text{mix} = \begin{pmatrix} a^{(3)}_1 & b^{(3)}_1 & c^{(3)}_1 \\ a^{(3)}_2 & b^{(3)}_2 & c^{(3)}_2 \\ \hat{a}^\text{mix} & \hat{b}^\text{mix} & \hat{c}^\text{mix} \end{pmatrix},
\]
and from the constraint $\det \hat{F}^\text{mix} = 0$ obtain the polynomial
\begin{equation}
\label{eq:detFmix}
g^\text{mix} = \frac{\det \hat{F}^\text{mix}}{(y_{12}^2 - y_{32}^2) u_3^2 + 2(y_{12}^2 + 2y_{12}^2y_{32}^2 + y_{32}^2)u_3v_3 + (y_{12}^2 - y_{32}^2) v_3^2} = 0.
\end{equation}
Substituting~\eqref{eq:vjformula} to $g^\text{mix} = g^\text{mix}(u_2, v_2, u_3, v_3)$, we get
\begin{equation}
\label{eq:hmix}
h^\text{mix}(u_2, u_3, s) = \frac{(1 + su_2)^3 (1 + su_3)^3\, g^\text{mix}(u_2, \frac{s - u_2}{1 + su_2}, u_3, \frac{s - u_3}{1 + su_3})}{(1+u_2^2) (1+u_3^2) (x_{13} + y_{13}s) (x_{14} + y_{14}s)}.
\end{equation}

\begin{remark}
The denominators in~\eqref{eq:hatfmix},~\eqref{eq:detFmix} and~\eqref{eq:hmix} are supposed to be nonzero. By a straightforward computation, one verifies that in general their roots do not give a solution to Problem~\ref{problem1}.
\end{remark}

The polynomial $h^\text{mix}$ has a special symmetric form:
\begin{equation}
\label{eq:hmixform}
h^\text{mix} = u_2^4 u_3^4 \sum\limits_{k=0}^4 \sum\limits_{m=0}^4 p^\text{mix}_{k, m}\, [u_2^{4 - k} + (-u_2)^{k - 4}] [u_3^{4 - m} + (-u_3)^{m - 4}],
\end{equation}
where $p^\text{mix}_{k, m}$ are polynomials in~$s$. Substituting $\tilde{u_j} = u_j - u_j^{-1}$, we transform $h^\text{mix}$ to the polynomial
\begin{equation}
\label{eq:tildehmix}
\tilde{h}^\text{mix} = \sum\limits_{k=0}^2 \sum\limits_{m=0}^2 \tilde{p}^\text{mix}_{k, m}\, \tilde{u}_2^k\, \tilde{u}_3^m,
\end{equation}
where
\begin{equation}
\label{eq:tildepmix}
\begin{split}
\tilde{p}^\text{mix}_{2k, 2m} &= \sum\limits_{i=k}^2 \sum\limits_{j=m}^2 p^\text{mix}_{4-2i, 4-2j} \varkappa_{i, k} \varkappa_{j, m},\\
\tilde{p}^\text{mix}_{2k, 2m+1} &= \sum\limits_{i=k}^2 \sum\limits_{j=m}^1 p^\text{mix}_{4-2i, 3-2j} \varkappa_{i, k} \varkappa_{j+1/2, m+1/2},\\
\tilde{p}^\text{mix}_{2k+1, 2m} &= \sum\limits_{i=k}^1 \sum\limits_{j=m}^2 p^\text{mix}_{3-2i, 4-2j} \varkappa_{i+1/2, k+1/2} \varkappa_{j, m},\\
\tilde{p}^\text{mix}_{2k+1, 2m+1} &= \sum\limits_{i=k}^1 \sum\limits_{j=m}^1 p^\text{mix}_{3-2i, 3-2j} \varkappa_{i+1/2, k+1/2} \varkappa_{j+1/2, m+1/2},
\end{split}
\end{equation}
and $\varkappa_{i, k}$ is given by~\eqref{eq:kappa}.

\subsection{Some notions from elimination theory}
In this subsection, we briefly recall some notions from elimination theory, see~\cite{CLS} for details.

Given an ideal $J = \langle f_1,\ldots, f_m \rangle \subset \mathbb C[x_1, \ldots, x_n]$, the $l$th \textit{elimination ideal} $J_l$ is defined by $J_l = J \cap \mathbb C[x_{l+1}, \ldots, x_n]$, i.e. $J_l$ consists of all consequences of $f_1 = \cdots = f_m = 0$ eliminating the variables $x_1$, \ldots, $x_l$. The generator of~$J_{n-1}$ is called the \textit{eliminant} of~$J$ in the variable~$x_n$.

As is well-known, the best way for finding elimination ideals is computing a Gr\"{o}bner basis of $J$ with respect to the pure lexicographic ordering $x_1 > \cdots > x_n$. However, in many cases this computation is practically impossible because of bounded computer resources. For these cases some roundabout ways, such as theory of resultants, should be applied.

Let $f_1 = \sum\limits_{i=0}^p a_ix_1^i$ and $f_2 = \sum\limits_{i=0}^q b_ix_1^i$, where $a_i, b_i \in \mathbb C[x_2, \ldots, x_n]$, be polynomials in~$\mathbb C[x_2, \ldots, x_n][x_1]$. The \textit{Sylvester matrix} $\Syl_{x_1}(f_1, f_2) = (s_{kl})_{k, l = 1}^{p+q}$ of~$f_1$ and~$f_2$ with respect to~$x_1$ is defined by
\[
\begin{split}
s_{j, j+i} &= a_{p-i},\\
s_{p+q-j+1, j+i} &= b_{q-i},
\end{split}
\]
and all other entries of $\Syl_{x_1}(f_1, f_2)$ are equal to zero. The \textit{resultant} of $f_1$ and $f_2$ with respect to~$x_1$, denoted $\mathcal R_{x_1}(f_1, f_2)$, is the determinant of the Sylvester matrix, i.e. $\mathcal R_{x_1}(f_1, f_2) = \det \Syl_{x_1}(f_1, f_2)$. The following lemma states the relation between resultants and elimination ideals.
\begin{lemma}[\cite{CLS}]
\label{lem:relation}
Let $f_1, f_2 \in \mathbb C[x_1, \ldots, x_n]$ have positive degree in~$x_1$. Then, $\mathcal R_{x_1}(f_1, f_2) \in J_1$.
\end{lemma}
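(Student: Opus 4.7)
The plan is to prove two things: that $\mathcal R_{x_1}(f_1, f_2)$ is free of $x_1$, and that it belongs to the ideal $J = \langle f_1, f_2\rangle$. The first assertion is immediate from the definition, since every entry $a_i, b_i$ of the Sylvester matrix lies in $\mathbb C[x_2, \ldots, x_n]$, and so does its determinant. The substantive point is therefore to exhibit explicit polynomials $A, B \in \mathbb C[x_1, \ldots, x_n]$ with $A f_1 + B f_2 = \mathcal R_{x_1}(f_1, f_2)$; combined with the first assertion, this yields $\mathcal R_{x_1}(f_1, f_2) \in J \cap \mathbb C[x_2, \ldots, x_n] = J_1$.

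To construct such a representation, I will work with $S = \Syl_{x_1}(f_1, f_2)$ viewed as a matrix over $\mathbb C[x_1, \ldots, x_n]$ and carry out the following column operation: multiply the $j$th column by $x_1^{p+q-j}$ and add the result to the last column, for each $j = 1, \ldots, p+q-1$. Column operations of this form, adding polynomial multiples of other columns to a given column, are valid over any commutative ring and preserve the determinant, so $\mathcal R_{x_1}(f_1, f_2)$ equals the determinant of the modified matrix. A short computation based on the indexing $s_{j, j+i} = a_{p-i}$ and $s_{p+q-j+1, j+i} = b_{q-i}$ shows that each entry of the new last column is a polynomial multiple of either $f_1$ or $f_2$; concretely, row $k$ contains $x_1^{q-k} f_1$ for $1 \leq k \leq q$ and $x_1^{k-q-1} f_2$ for $q+1 \leq k \leq p+q$.

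Expanding the modified determinant along this last column now produces a sum of terms of the form (cofactor in $\mathbb C[x_2, \ldots, x_n]$) $\times$ (polynomial multiple of $f_1$ or $f_2$). Grouping the $f_1$- and $f_2$-terms yields the sought identity $\mathcal R_{x_1}(f_1, f_2) = A f_1 + B f_2$ with $A, B \in \mathbb C[x_1, \ldots, x_n]$. The only genuinely subtle point is the justification of the column operation in the polynomial ring rather than in its field of fractions; this is standard, being a consequence of the multilinearity and alternation of the determinant over commutative rings. The hypothesis that both polynomials have positive degree in $x_1$ is used only to ensure $p, q \geq 1$, so that the Sylvester matrix and the described column operation are nonvacuous.
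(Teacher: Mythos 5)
The paper does not prove this lemma at all --- it is quoted from the cited reference [CLS] --- so there is no internal proof to compare against. Your argument is correct and is essentially the standard textbook proof (the one in Cox--Little--O'Shea): the column operation you describe does produce $x_1^{q-k}f_1$ in rows $1\le k\le q$ and $x_1^{k-q-1}f_2$ in rows $q+1\le k\le p+q$ under the paper's indexing $s_{j,j+i}=a_{p-i}$, $s_{p+q-j+1,j+i}=b_{q-i}$, the cofactors along the modified last column involve only unmodified columns and hence lie in $\mathbb C[x_2,\ldots,x_n]$, and together with the trivial observation that $\mathcal R_{x_1}(f_1,f_2)\in\mathbb C[x_2,\ldots,x_n]$ this gives $\mathcal R_{x_1}(f_1,f_2)=Af_1+Bf_2\in J\cap\mathbb C[x_2,\ldots,x_n]=J_1$. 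No gaps.
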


\subsection{Thirty-sixth degree univariate polynomial}
At this step, we have derived the following polynomial system in the variables $\tilde{u}_2$, $\tilde{u}_3$ and~$s$:
\begin{equation}
\label{eq:h2h3hmix}
\begin{split}
\tilde{h}^{(2)} &= [4]s^2\tilde{u}_2^6 + [5]s\tilde{u}_2^5 + [6]\tilde{u}_2^4 + [6]\tilde{u}_2^3 + [6]\tilde{u}_2^2 + [6]\tilde{u}_2 + [6],\\
\tilde{h}^{(3)} &= [4]s^2\tilde{u}_3^6 + [5]s\tilde{u}_3^5 + [6]\tilde{u}_3^4 + [6]\tilde{u}_3^3 + [6]\tilde{u}_3^2 + [6]\tilde{u}_3 + [6],\\
\tilde{h}^\text{mix} &= [3]s\tilde{u}_2^2\tilde{u}_3^2 + [4]\tilde{u}_2^2\tilde{u}_3 + [4]\tilde{u}_2\tilde{u}_3^2 + [4]\tilde{u}_2^2 + [4]\tilde{u}_2\tilde{u}_3 + [4]\tilde{u}_3^2 + [4]\tilde{u}_2 + [4]\tilde{u}_3 + [4],
\end{split}
\end{equation}
where $[n]$ means an $n$th degree polynomial in the variable~$s$.

Let us consider an ideal $J = \langle\tilde{h}^{(2)}, \tilde{h}^{(3)}, \tilde{h}^\text{mix} \rangle \subset \mathbb C[\tilde{u}_2, \tilde{u}_3, s]$ and denote its elimination ideals by $J_1 = J \cap \mathbb C[\tilde{u}_3, s]$ and $J_2 = J \cap \mathbb C[s]$.

By direct computation, we find
\begin{equation}
\label{eq:r}
\mathcal R_{\tilde{u}_2}(\tilde{h}^{(2)}, \tilde{h}^\text{mix}) = r (s^2 + 1)^6\, \mathcal S_2^2  \in J_1,
\end{equation}
where $r$ is a 28th total degree polynomial in the variables $\tilde{u}_3$ and~$s$, $\mathcal S_2$ and $\mathcal S_3$ (see below formula~\eqref{eq:S}) are 4th degree polynomials in the variable~$s$ with the known (rather cumbersome) coefficients. For example, the trailing coefficient of~$\mathcal S_k$ is
\begin{multline}
y_{k2}^2 \left[x_{14}x_{k3}(y_{12} - y_{13})(y_{k2} - y_{k4}) - x_{13}x_{k4}(y_{12} - y_{14})(y_{k2} - y_{k3})\right] \\ \times\left[x_{13}(y_{12} - y_{14})(x_{k4}^2 - x_{k3}x_{k4} - y_{k3}y_{k4} + y_{k4}^2)\right. \\\left.+ x_{14}(y_{12} - y_{13})(x_{k3}^2 - x_{k3}x_{k4} - y_{k3}y_{k4} + y_{k3}^2)\right].
\end{multline}
Further,
\begin{equation}
\label{eq:S}
\mathcal R_{\tilde{u}_3}(\tilde{h}^{(3)}, \mathcal R_{\tilde{u}_2}(\tilde{h}^{(2)}, \tilde{h}^\text{mix})) = \mathcal S (s^2 + 1)^{72} s^4 (x_{13} + y_{13}s)^4 (x_{14} + y_{14}s)^4\, \mathcal S_2^{12} \mathcal S_3^{12} \in J_2,
\end{equation}
where $\mathcal S$ is a 36th degree irreducible polynomial in~$s$.

\begin{lemma}
\label{lem:eliminant}
The eliminant of the ideal~$J$ in the variable $s$ is a multiple of the polynomial~$\mathcal S$, i.e. $J_2 = \langle\mathcal S p\rangle$, where $p \in \mathbb C[s]$.
\end{lemma}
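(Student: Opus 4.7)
The plan is to apply Lemma~\ref{lem:relation} twice to the generators of $J$, combine the output with the explicit factorization~\eqref{eq:S}, and then remove the extraneous factors by invoking irreducibility of $\mathcal S$. Since $\tilde h^{(2)}$ and $\tilde h^\text{mix}$ both lie in $J$ and have positive degree in $\tilde u_2$, Lemma~\ref{lem:relation} gives $\mathcal R_{\tilde u_2}(\tilde h^{(2)},\tilde h^\text{mix})\in J\cap\mathbb C[\tilde u_3,s]$. Now $\tilde h^{(3)}$ and this resultant both lie in $J$ and have positive degree in $\tilde u_3$, so a second application of Lemma~\ref{lem:relation} places
\[
R:=\mathcal R_{\tilde u_3}\!\bigl(\tilde h^{(3)},\,\mathcal R_{\tilde u_2}(\tilde h^{(2)},\tilde h^\text{mix})\bigr)
\]
in $J_2$. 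By~\eqref{eq:S}, $R=\mathcal S\cdot F$ where
\[
F=(s^2+1)^{72}\,s^4\,(x_{13}+y_{13}s)^4\,(x_{14}+y_{14}s)^4\,\mathcal S_2^{12}\,\mathcal S_3^{12}.
\]

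Write $J_2=\langle e\rangle$, using that $\mathbb C[s]$ is a PID. The inclusion $R\in J_2$ yields $e\mid\mathcal S F$. Every irreducible factor of $F$ has degree at most~$4$, while $\mathcal S$ is irreducible of degree~$36$, so $\gcd(\mathcal S,F)=1$; consequently, in the unique factorization of $e$ the prime $\mathcal S$ appears with multiplicity $0$ or~$1$. In the former case $e\mid F$; in the latter, $\mathcal S\mid e$ and the lemma follows with $p:=e/\mathcal S$. It therefore remains only to exclude $e\mid F$.

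To rule out $e\mid F$, I would exhibit a root $s_0$ of $\mathcal S$ which lies in the projection $\pi(V(J))\subset\mathbb C$ but avoids the zero set of $F$: since every $f\in J$ vanishes on $\pi(V(J))$, and $e\in J$, such an $s_0$ is a common root of $\mathcal S$ and $e$, and irreducibility of $\mathcal S$ forces $\mathcal S\mid e$. For generic input data the 36 roots of $\mathcal S$ are distinct and miss the finitely many zeros of $F$; a standard resultant-chasing argument then shows that, provided the leading coefficients of $\tilde h^{(2)},\tilde h^{(3)},\tilde h^\text{mix}$ in $\tilde u_2$ and $\tilde u_3$ do not simultaneously vanish at the candidate root, one recovers via~\eqref{eq:fjiform} and~\eqref{eq:vjformula} finite values $\tilde u_2(s_0),\tilde u_3(s_0)$ satisfying the full system~\eqref{eq:h2h3hmix}. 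The hard part is precisely verifying these nondegeneracy conditions generically: they are most cleanly handled either by a semicontinuity/specialization argument anchored at a single numerical instance on which the claim is checked by hand, or by direct bookkeeping of the leading coefficients through the resultant cascade.
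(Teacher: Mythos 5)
Your proof is correct and follows essentially the same route as the paper's: both rest on the factorization \eqref{eq:S} placing $\mathcal S\cdot F$ in $J_2$, on extending a root $s_0$ of $\mathcal S$ to a point of $V(J)$ via the resultant cascade (the paper invokes the Extension Theorem of \cite{CLS}, under the same generic nonvanishing of the leading coefficients of $\tilde h^{(2)}$ and $\tilde h^{(3)}$ that you flag as the remaining nondegeneracy check), and on the irreducibility of $\mathcal S$ to convert the shared root into divisibility. Your intermediate dichotomy ($e\mid \mathcal S F$, hence $e\mid F$ or $\mathcal S\mid e$ by coprimality) is sound but redundant once the extension step is in place, since that step already exhibits a common root of $e$ and $\mathcal S$.
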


\begin{proof}
Let $s_0$ be a root of~$\mathcal S$. Since $s_0$ is not in general a root of the leading coefficients of~$\tilde{h}^{(2)}$ and~$\tilde{h}^{(3)}$ (denoted by $[4]s^2$ in~\eqref{eq:h2h3hmix}), it follows from the Extension Theorem~\cite{CLS} that there exist such $\tilde{u}_{2, 0}$ and $\tilde{u}_{3, 0}$ that $(\tilde{u}_{2, 0}, \tilde{u}_{3, 0}, s_0)$ is a solution to system~\eqref{eq:h2h3hmix}. This exactly means that $s_0$ is a root of the eliminant of~$J$ in the variable~$s$.
\end{proof}

By Lemma~\ref{lem:eliminant}, a root of the polynomial $\mathcal S$ can be extended to a solution of system~\eqref{eq:h2h3hmix}. Moreover, we conjecture that in general every solution of the initial system~\eqref{eq:system} can be expressed in terms of a certain root of the polynomial $\mathcal S$.

\begin{remark}
The polynomial $\mathcal S$ can be found by computing the resultant $\mathcal R_{\tilde{u}_3}(\tilde{h}^{(3)}, r)$ and then dividing it by the polynomial $(s^2 + 1)^{36} s^4 (x_{13} + y_{13}s)^4 (x_{14} + y_{14}s)^4\, \mathcal S_3^{12}$.
\end{remark}

\begin{remark}
\label{rem:realroots}
Since $s = \tan \varphi_2 = \tan \varphi_3$, a complex root of~$\mathcal S$ leads to complex Euler angles having no geometric interpretation. Thus, only real roots of~$\mathcal S$ must be treated.
\end{remark}

\subsection{Structure recovery}
Let $s_0$ be a real root of~$\mathcal{S}$. Then, we can recover the matrices $R^{(2)}$, $R^{(3)}$ and $[t]_\times$ in closed form as follows.

We first propose a simple numerically stable algorithm for finding the $\tilde{u}_2$- and $\tilde{u}_3$-components of the solution. It consists of two steps.  First, one finds all real roots of the \textit{univariate} 6th degree polynomials $\tilde{h}^{(2)}(\tilde{u}_2, s_0)$ and~$\tilde{h}^{(3)}(\tilde{u}_3, s_0)$ defined in~\eqref{eq:tildehj}. Then, the solution $(\tilde{u}_{2, 0}, \tilde{u}_{3, 0}, s_0)$ corresponds to a minimal value of $|\tilde{h}^\text{mix}(\tilde{u}_2, \tilde{u}_3, s_0)|$, where $\tilde{u}_2$ and $\tilde{u}_3$ run over the obtained roots.

Now we find the values
\[
u_{j, 0} = \tilde{u}_{j, 0}/2 - \sign(\tilde{u}_{j, 0})\sqrt{(\tilde{u}_{j, 0}/2)^2 + 1}, \quad j = 2, 3,
\]
subject to $|u_{j, 0}| \leq 1$. After that, we obtain $v_{j, 0}$ by~\eqref{eq:vjformula} and $w_{j, 0}$ by
\begin{equation}
\label{eq:wjformula}
w_{j, 0} = -\frac{a^{(j)}_1 c^{(j)}_2 -  c^{(j)}_1 a^{(j)}_2}{a^{(j)}_1 b^{(j)}_2 -  b^{(j)}_1 a^{(j)}_2},
\end{equation}
where $a^{(j)}_i$, $b^{(j)}_i$, $c^{(j)}_i$ are defined in~\eqref{eq:fjiform}. We also compute the values $u'_{j, 0}$, $v'_{j, 0}$ and $w'_{j, 0}$ by~\eqref{eq:invtrans}. After that, we find the entries of $R^{(j)}$ and $[t]_\times$ by~\eqref{eq:cayley} and~\eqref{eq:transl} respectively.

Let us forget for the moment about the third camera and denote by $R^+ = R^{(2)}(u_{2, 0}, v_{2, 0}, w_{2, 0})$, $R^- = R^{(2)}(u'_{2, 0}, v'_{2, 0} , w'_{2, 0})$, $\mathbf{t}^\pm = \pm\begin{pmatrix}t_x & t_y & t_z\end{pmatrix}^{\mathrm T}$. Then, as is well-known, there are four possible relative positions and orientations for the second camera: $(R^+ \mid \mathbf{t}^+)$, $(R^+ \mid \mathbf{t}^-)$, $(R^{-} \mid \mathbf{t}^{+})$ and $(R^{-} \mid \mathbf{t}^{-})$. Let the true configuration correspond to $(R^{(2)} \mid \mathbf{t})$. Since the point $P_1$ must be in front of all the cameras, it follows that, first, $z_{P_1} > 0$ and, second,
\[
(-x_{O_2})R^{(2)}_{13} + (-y_{O_2})R^{(2)}_{23} + (z_{P_1} - z_{O_2})R^{(2)}_{33} > 0.
\]
Denote by $c_1 = t_x/R^{+}_{13}$, $c_2 = c_1 R^{+}_{33}$. Then,
\begin{itemize}
\item
if $c_1<0$ and $c_2<t_z$, then $R^{(2)} = R^{+}$, $\mathbf{t} = \mathbf{t}^{+}$;
\item
else if $c_1>0$ and $c_2>t_z$, then $R^{(2)} = R^{+}$, $\mathbf{t} = \mathbf{t}^{-}$;
\item
else if $t_x/R^{-}_{13}<0$ and $t_xR^{-}_{33}/R^{-}_{13}<t_z$, then $R^{(2)} = R^{-}$, $\mathbf{t} = \mathbf{t}^{+}$;
\item
else $R^{(2)} = R^{-}$, $\mathbf{t} = \mathbf{t}^{-}$.
\end{itemize}

Similarly, we find the true relative position and orientation $(R^{(3)} \mid \sigma\mathbf{t})$ for the third camera. Here $\sigma$ is either $+1$ or~$-1$.

After we have found $R^{(2)}$, $R^{(3)}$ and $\mathbf{t}$, the coordinates of $O_2$, $O_3$ and~$P_i$ can be recovered as follows (recall that the baseline length $d = l_{O_1O_2}$ fixes the overall scale, see Remark~\ref{rem:ambiguity}):
\begin{align}
\label{eq:coordsO2}
z_{O_2} &= t_z\, \frac{d}{\|\mathbf{t}\|}, & \quad
x_{O_2} &= t_x\, \frac{d}{\|\mathbf{t}\|}, & \quad
y_{O_2} &= t_y\, \frac{d}{\|\mathbf{t}\|},\\
\label{eq:coordsO3}
z_{O_3} &= \frac{x'_{31} (t_x - t_z x'_{21})}{x'_{21} (t_x - t_z x'_{31})}\, z_{O_2}, & \quad
x_{O_3} &= t_x\, \frac{z_{O_3}}{t_z}, & \quad
y_{O_3} &= t_y\, \frac{z_{O_3}}{t_z},\\
\label{eq:coordsPi}
z_{P_i} &= \frac{t_x - t_z x'_{2i}}{x_{1i} - x'_{2i}}\, \frac{z_{O_2}}{t_z}, & \quad
x_{P_i} &= x_{1i} z_{P_i}, & \quad
y_{P_i} &= y_{1i} z_{P_i}.
\end{align}

The true solution to Problem~\ref{problem1}, which is assumed to be unique, corresponds to a root of $\mathcal S$ that minimizes the \textit{reprojection error}
\begin{equation}
\label{eq:epsilon}
\varepsilon = \sum\limits_{j=1}^3\sum\limits_{i=1}^4 (x_{ji} - \hat{x}_{ji})^2 + (y_{ji} - \hat{y}_{ji})^2,
\end{equation}
where the perfectly matched points $\begin{pmatrix}\hat{x}_{ji} & \hat{y}_{ji} & 1\end{pmatrix}^{\mathrm T}$ are defined by
\begin{equation}
\label{eq:hatxy}
\hat{\omega}_{ji}\begin{pmatrix}\hat{x}_{ji} \\ \hat{y}_{ji} \\ 1\end{pmatrix} = R^{(j)\mathrm T} \begin{pmatrix}x_{P_i} - x_{O_j} \\ y_{P_i} - y_{O_j} \\ z_{P_i} - z_{O_j}\end{pmatrix}.
\end{equation}

As a result, we have obtained a unique solution to Problem~\ref{problem1} which, first, satisfies the epipolar constraints~\eqref{eq:epipolar} and, second, minimizes the reprojection error~\eqref{eq:epsilon}. We must yet multiply the obtained coordinates~\eqref{eq:coordsO2},~\eqref{eq:coordsO3} and~\eqref{eq:coordsPi} by $\rho_1^{\mathrm T}$, where $\rho_1$ is defined in~\eqref{eq:rotations}, in order to return to the initial coordinate system. Finally, we note that the matrix $\rho_1^{\mathrm T}R^{(j)}\rho_j$ encodes an information on the initial $j$th camera orientation, e.g. the last column of $\rho_1^{\mathrm T}R^{(j)}\rho_j$ is the $z$-axis of $O_j xyz$.

\section{Experiments on synthetic data}
\label{sec:experiments}
In this section, we compare our algorithm with the five-point Li-Hartley solver~\cite{LH} on the following two sets of synthetic data:
\begin{enumerate}
\item
generic configuration: all simulated scene points are between the planes $z = 1$ and $z = 2$;
\item
planar configuration: all simulated scene points are on the plane $z = 2$.
\end{enumerate}

The third camera center $O_3$ varies randomly between $\frac{1}{3}O_1O_2$ and $\frac{2}{3}O_1O_2$. The baseline length $d = 0.3$ is the same for both sets of data. The field of view equals 45 degrees. For each configuration we add the Gaussian image noise with a standard deviation of one pixel in a $512\times 512$ pixel image. Each experiment is run for 100 trials.

\begin{figure}[t]
\centering
\subfigure[Generic configuration]
{\includegraphics[width=0.46\hsize]{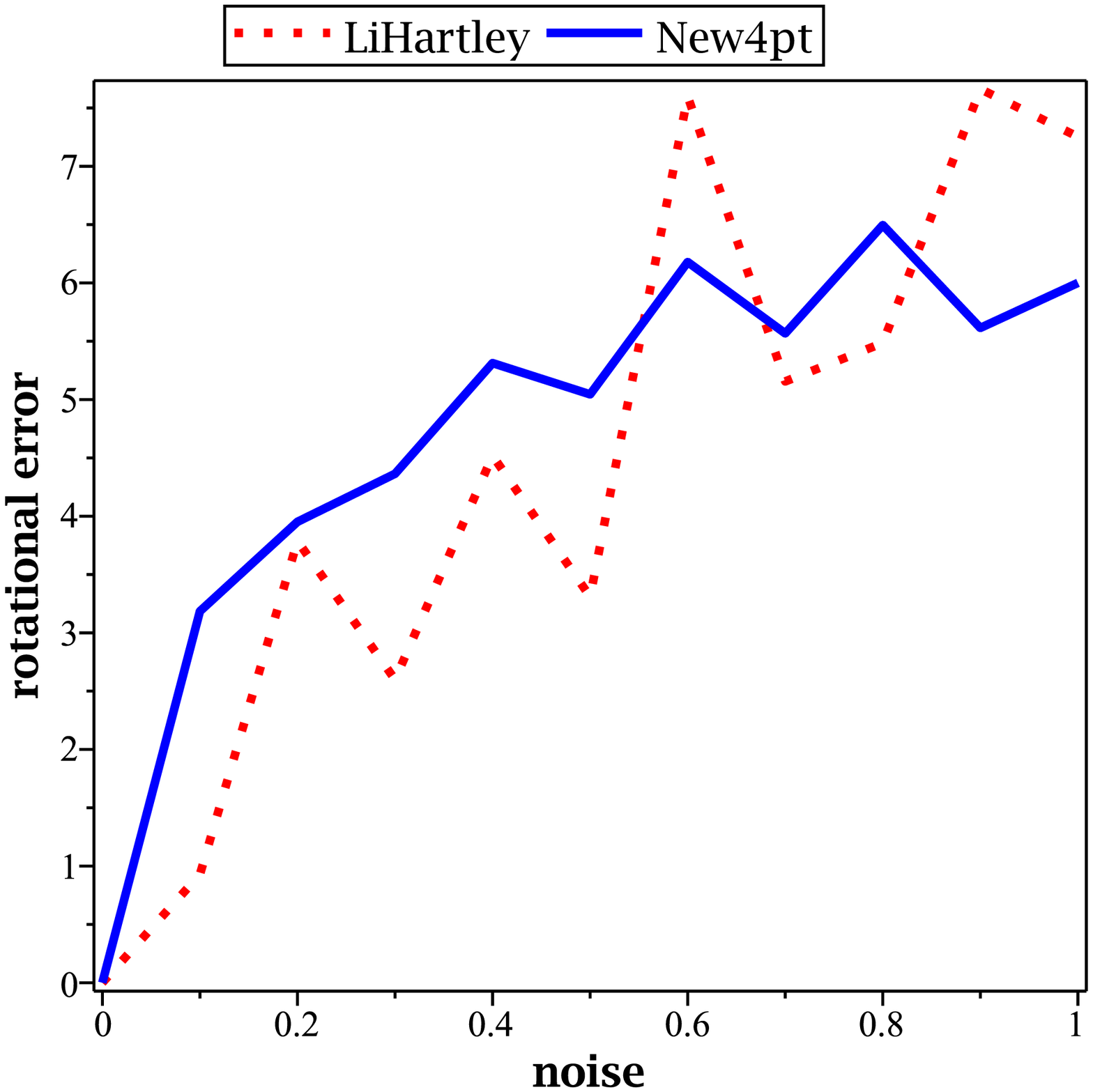}\label{fig:rot_generic}} \qquad
\subfigure[Planar configuration]
{\includegraphics[width=0.46\hsize]{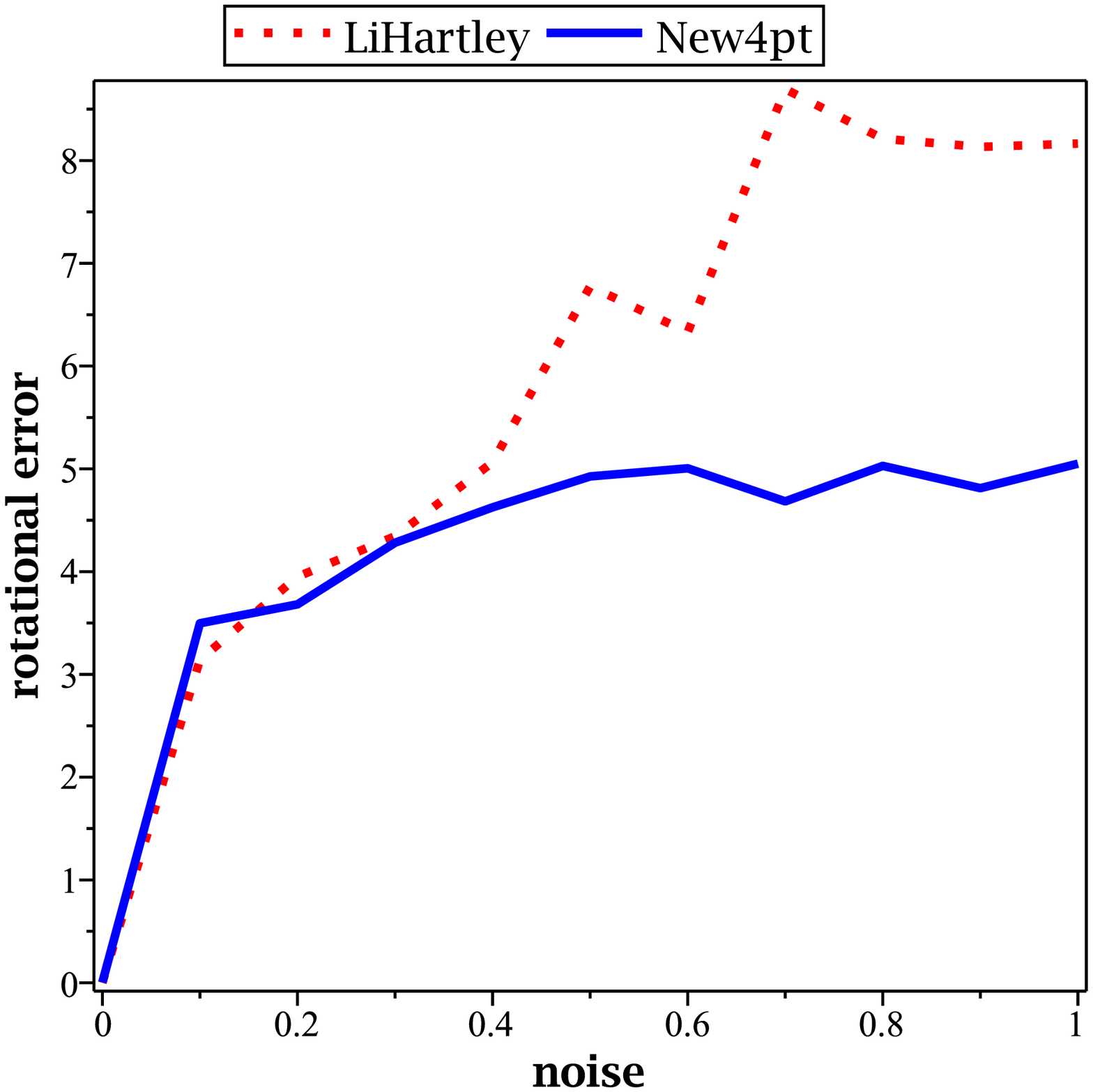}\label{fig:rot_planar}}
\caption{Rotational error (in degrees) changing as a function of image noise (in pixels)}
\label{fig:rot_errors}
\end{figure}

\begin{figure}[t]
\centering
\subfigure[Generic configuration]
{\includegraphics[width=0.46\hsize]{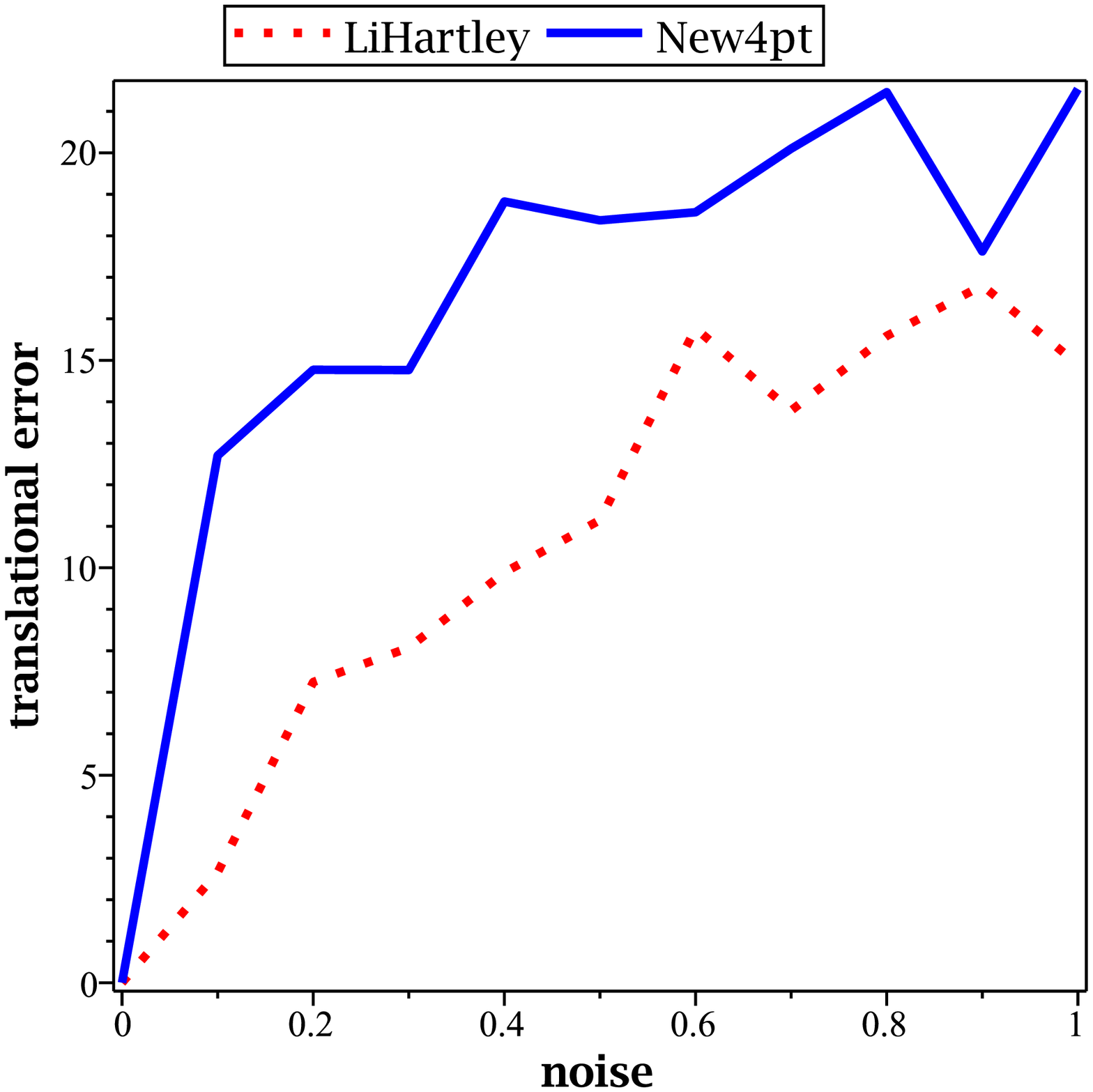}\label{fig:transl_generic}} \qquad
\subfigure[Planar configuration]
{\includegraphics[width=0.46\hsize]{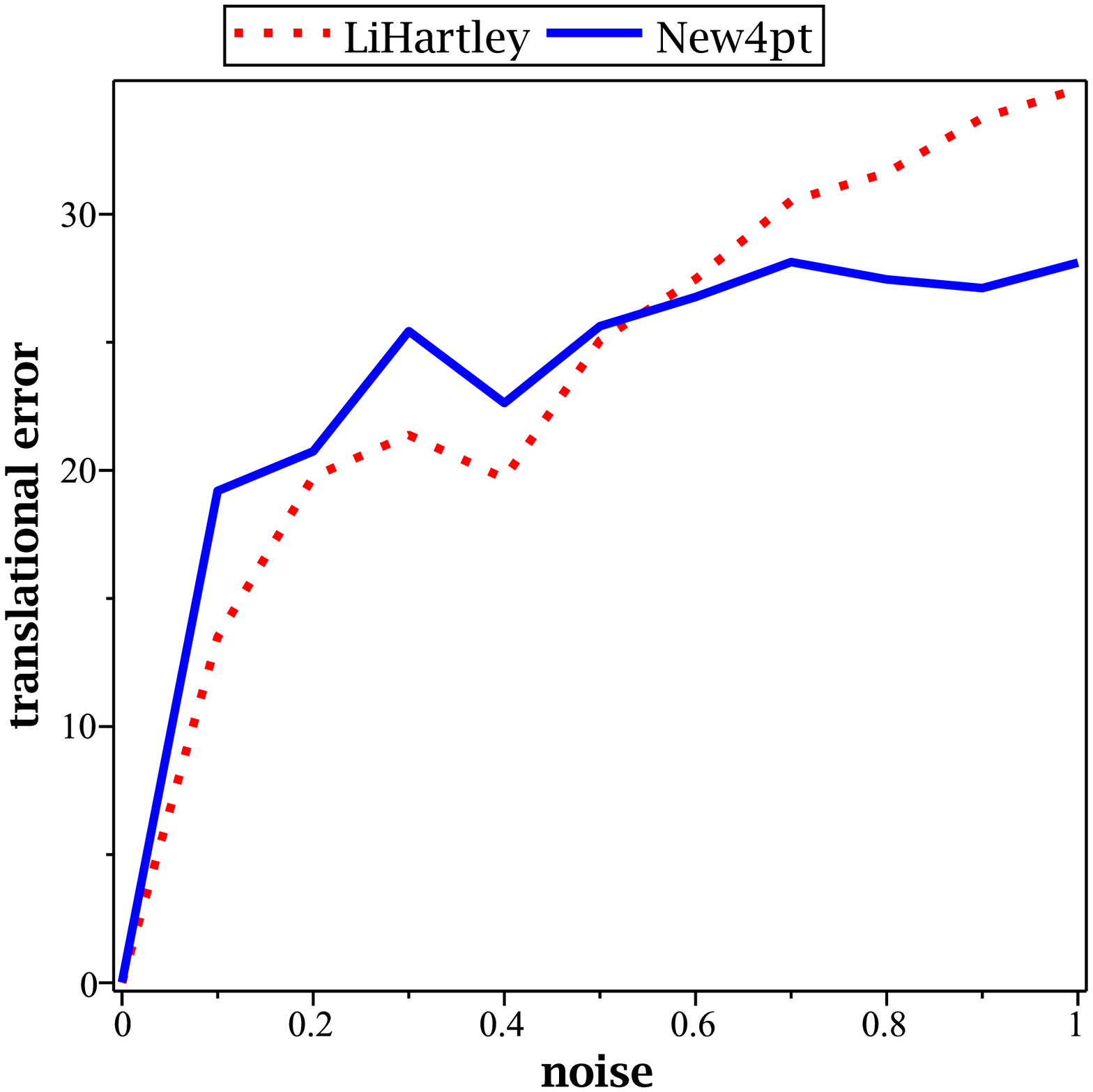}\label{fig:transl_planar}}
\caption{Translational error (in degrees) changing as a function of image noise (in pixels)}
\label{fig:transl_errors}
\end{figure}

The $j$th \textit{rotational error} is defined by
\begin{equation}
\varepsilon^{(j)}_\text{rot} = \arccos \frac{\Tr(\bar{R}^{(j) \mathrm T} \rho_1^{\mathrm T} R^{(j)} \rho_j) - 1}{2},
\end{equation}
where $\bar{R}^{(j)}$ is the true orientation matrix for the $j$th camera, $\Tr(M)$ is the trace of matrix~$M$. The \textit{translational error} is defined by
\begin{equation}
\varepsilon_\text{transl} =  \arccos\frac{\bar{\mathbf{t}}^{\mathrm T} \rho_1^{\mathrm T} \mathbf{t}}{\|\mathbf{t}\| \cdot \|\bar{\mathbf{t}}\|},
\end{equation}
where $\bar{\mathbf{t}}$ is the true translation vector.

The average values of rotational ($= (\varepsilon^{(2)}_\text{rot} + \varepsilon^{(3)}_\text{rot})/2$) and translational errors are reported in Figure~\ref{fig:rot_errors} and Figure~\ref{fig:transl_errors} respectively.

\section{Discussion of results}
\label{sec:discussion}

A non-iterative solution to the four-point three-views pose problem has been proposed for the case of collinear cameras. A computation on synthetic data confirms its correctness and robustness. The new algorithm can be used as a hypothesis-generator for RANSAC-like schemes. Its advantages are
\begin{itemize}
\item
minimal possible number of scene points needed for reconstruction;
\item
uniqueness of the solution;
\item
good enough behavior under image noise conditions even in case of planar scenes.
\end{itemize}

A big number of arithmetic operations needed to derive the polynomial $\mathcal S$ and consequently a big computational error is a weakness of our method in its current stage.

\bibliographystyle{amsplain}

\end{document}